\renewcommand\footnotetextcopyrightpermission[1]{} 
\newtheorem{theorem}{Theorem}[section]
\newtheorem{definition}[theorem]{Definition}
\newtheorem{example}{Example}[section]
\def\eqref#1{equation~\ref{#1}}
\def\1{\bm{1}}
\DeclareMathAlphabet{\mathsfit}{\encodingdefault}{\sfdefault}{m}{sl}
\SetMathAlphabet{\mathsfit}{bold}{\encodingdefault}{\sfdefault}{bx}{n}
\DeclareMathOperator*{\argmin}{arg\,min}
\begin{document}

\title{Certified Continual Learning for Neural Network Regression}
%
%
\author[Long H. Pham \& Jun Sun]{Long H. Pham \& Jun Sun \\
School of Computing and Information Systems, Singapore Management University \\
Singapore, Singapore\\
\texttt{\{hlpham,junsun\}@smu.edu.sg}
}

\newcommand{\fix}{\marginpar{FIX}}
\newcommand{\new}{\marginpar{NEW}}
\newcommand{\notimplies}{\;\not\!\!\!\implies}



\begin{abstract}
On the one hand, there has been considerable progress on neural network verification in recent years, which makes certifying neural networks a possibility. On the other hand, neural networks in practice are often re-trained over time to cope with new data distribution or for solving different tasks (a.k.a.~continual learning). Once re-trained, the verified correctness of the neural network is likely broken, particularly in the presence of the phenomenon known as catastrophic forgetting. In this work, we propose an approach called certified continual learning which improves existing continual learning methods by preserving, as long as possible, the established correctness properties of a verified network. Our approach is evaluated with multiple neural networks and on two different continual learning methods. The results show that our approach is efficient and the trained models preserve their certified correctness and often maintain high utility.  
\end{abstract}

\maketitle              

\section{Introduction}
Neural network based deep learning is increasingly applied in safety/security-critical systems, such as self-driving cars~\cite{bojarski2016end}, and face recognition for authentication~\cite{parkhi2015deep}. Thus, methods and tools that are capable of verifying their correctness (with respect to a range of properties such as safety~\cite{katz2017reluplex}, robustness~\cite{singh2019abstract}, backdoor-freeness~\cite{pham2022backdoor}, fairness~\cite{sun2021probabilistic} and privacy~\cite{DBLP:conf/ccs/AbadiCGMMT016}) are highly desirable. In recent years, there has been exciting development in neural network verification techniques and tools~\cite{katz2017reluplex,singh2019abstract,gehr2018ai2}, which has made verifying and certifying (small to medium size as of now) neural networks possible. 

Like traditional software systems, neural networks often evolve over time, i.e., they may be updated through retraining (a.k.a.~fine-tuning) to cope with the latest data distribution or to perform adjusted tasks. For instance, a recent trend is to build neural networks by retraining existing general purpose neural network models (such as GPT-3~\cite{brown2020language}). In fact, there is a whole area of research on this topic called continual learning~\cite{van2019three}. One of the central problems to be addressed by continual learning techniques is the phenomenon known as catastrophic forgetting~\cite{mccloskey1989catastrophic,ratcliff1990connectionist}, i.e., retraining with a different data distribution would result in a dramatic change in the trained behavior of the neural network. 

For neural networks used in safety-critical systems, catastrophic forgetting is particularly concerning. That is, once a verified (against certain critical properties) neural network is re-trained, there is no guarantee that those critical properties are still satisfied. In fact, the properties are unlikely satisfied anymore if catastrophic forgetting occurs and it is highly non-trivial to repair the neural network.
It thus would be desirable if we can apply continual learning in a way such that the verified properties are preserved as much as possible. To the best of our knowledge, how to conduct continual learning without compromising the verified correctness of a neural network is an unexplored topic. 

In this work, we propose an approach called certified continual learning (CCL) to address the problem. {\color{black} Our main goal is to develop a training procedure that satisfies three properties: (i) the trained model should have high accuracy on both new and old learning tasks, (ii) the trained model should preserve as many verified properties as possible, (iii) the training procedure should not introduce a lot of overhead compared to the ordinary retraining procedure. We remark that our approach is particularly useful in case the users want their models to preserve some predefined properties (e.g., robustness or fairness) while adapting to new learning tasks.}

Our approach starts with a proposal that once a neural network is verified, a certificate for each verified property should be attached to the neural network. The certificates provide a way for us to apply continual learning techniques with multiple additional steps such that the correctness of the neural network is likely preserved. First, we propose certificate-based data synthesis, which generates additional training data for the retraining so that the network is ``reminded'' of the certificates. This step is extremely lightweight (i.e., sampling according to the precondition of the certificates) and yet effective. The second is a certificate regularizer, which penalizes modifying the neural network parameters beyond what is allowed according to the certificates. The third is a model clipping step that re-establishes the certificate if it is found to be broken, i.e., by relaxing the certificates locally based on Craig interpolation if feasible and selectively adjusting the neural network parameters so that the correctness of the neural network is preserved. Note that CCL is designed to achieve its goal with lightweight techniques first before applying heavy formal analysis techniques. Furthermore, CCL works with different kinds of continual learning methods, i.e., those that introduce new neurons~\cite{rusu2016progressive,li2017learning} and those that rely solely on existing neurons~\cite{kirkpatrick2017overcoming}.

To evaluate the feasibility of CCL, we conduct multiple experiments with the following setting. We use DeepPoly~\cite{singh2019abstract}, an existing state-of-the-art abstraction domain and tool to construct the certificates for multiple neural networks with respect to reachability, (local) robustness, and (individual) fairness properties. The certificates are in the form of concrete lower and upper bounds for each neuron. Results on models trained on the ACAS Xu dataset, the MNIST dataset, the CIFAR dataset, and the Census dataset (with and without introducing new neurons) show that CCL effectively maintains the verified correctness of the neural network models. To summarize, using CCL we can maintain 100\% of the verified correctness properties on all the models trained on these four datasets, whereas existing continual learning often fails to preserve the verified properties.

The rest of the paper is organized as follows. In Section~\ref{sec:background}, we present relevant background. In Section~\ref{sec:prelim}, we define our problem. In Section~\ref{sec:approach}, we present details of our approach. In Section~\ref{sec:eval}, we present the experimental evaluation results. In Section~\ref{sec:related}, we discuss related work. Lastly, we conclude in Section~\ref{sec:conclusion}.

\section{Background} \label{sec:background}

\subsection{Neural Network Verification} \label{nnverify}

For neural network verification, the network is considered a function, which maps an input vector to an output vector. A property can be typically specified in form of $pre \implies post$ where $pre$ is a precondition and $post$ is a postcondition. These conditions are typically in the form of first-order logic formulae, which capture the constraints on the input vector and output vector respectively. The following are a few popular neural network properties that are supported by our approach. 

\begin{itemize}
    \item A \emph{reachability} property specifies that if an input vector satisfies a certain constraint, the output vector must satisfy a certain constraint. For instance, the following is property 3 for a model trained on the ACAS Xu dataset~\cite{katz2017reluplex}, which is a neural network model for airborne collision avoidance. 
    $$
    \arraycolsep=2.5pt
    \begin{array}{lll}
        pre & \equiv & -0.3035 \leq x[0] \leq -0.2986 \wedge -0.0095 \leq x[1] \leq 0.0095 ~\wedge \\ 
    & & 0.4934 \leq x[2] \leq 0.5 \wedge 0.3 \leq x[3] \leq 0.5 \wedge 0.3 \leq x[4] \leq 0.5 \\
    post & \equiv & L(y) \neq 0 
    \end{array}$$
    where $x$ is the input vector; $y$ is the corresponding output vector; and $L$ is the labeling function (i.e., $L(y)$ returns the index of the maximum value in the output vector $y$). 
    Intuitively, this property specifies that under certain conditions, the network should never advise the first label.
    
    \item \emph{Robustness} property means that, given any sample $x_0$, any sample $x$ which is similar to $x_0$ must have the same label as $x_0$. Formally, it is specified as follows.
    $$\begin{array}{lll}
        pre & \equiv & d(x,x_0) \leq c \\
        post & \equiv & L(y) = L(y_0)
    \end{array}$$
    where $x_0$ is a constant vector representing a specific sample; $x$ is any other input vector; $y_0$ and $y$ are the output vectors corresponding to $x_0$ and $x$ respectively; $c$ is a constant number; and $d$ is a predefined function which measures the distance between two input vectors. 
    
    \item \emph{Fairness} is a desirable property for neural network models which have societal impact. In this work, we consider the property called individual fairness (a.k.a.~counterfactual fairness). Intuitively, the property states that for all pairs of input samples which differ only by certain sensitive features (such as gender or race), their labels must be the same. For instance, assume that the first feature of an input vector is the only sensitive feature (such as gender), such a fairness property can be formalized as follows.
    $$
    \arraycolsep=2.5pt
    \begin{array}{lll}
       pre & \equiv & x_1[0] \neq x_2[0] \wedge x_1[1]=x_2[1] \wedge \cdots \wedge x_1[n]=x_2[n] \\
       post & \equiv & L(y_1) = L(y_2) 
    \end{array}$$
\end{itemize}

To verify the above-mentioned properties and beyond, existing neural network verification approaches can be roughly divided into two categories, i.e., precise methods and approximation-based methods. The former precisely encodes the network and the property into some form of constraint-solving problem (e.g., SMT~\cite{katz2017reluplex,ehlers2017formal} and MILP~\cite{tjeng2017evaluating}). Thanks to precise encoding, these approaches typically guarantee soundness and completeness. However, they often suffer from scalability problems. Approximation-based methods build an abstract state from the precondition, then propagate the abstract state through the network using techniques such as abstract interpretation~\cite{gehr2018ai2,singh2019abstract}. The abstract state of the output layer is then used to verify the property. In general, approximation-based methods are sound (but not complete due to the over-approximation) and more efficient. 
Furthermore, many abstraction domains have been explored in these approximation-based methods, such as star-sets~\cite{tran2019star,tran2020verification} and DeepPoly~\cite{singh2019abstract}.

\subsection{Continual Learning} 
Continual learning is an area of research that is motivated by the need of continuously updating a trained neural network. One such scenario is that the model is required to learn from a stream of data produced by a non-stationary, dynamic environment. Since the data distribution may drift over time, the i.i.d.~(independent and identically distributed) assumption behind traditional machine learning training procedures may be violated, causing problems such as catastrophic forgetting of previous knowledge~\cite{mccloskey1989catastrophic,ratcliff1990connectionist}. Note that a similar situation arises in the setting of the so-called incremental-learning~\cite{shmelkov2017incremental}, online learning~\cite{ritter2018online}, or even reinforcement learning~\cite{rolnick2019experience,atkinson2021pseudo}.

Formally, we can assume that there is a stream of data $\langle (D_1, T_1),$ $(D_2, T_2), ..., (D_n, T_n) \rangle$, in which each $D_i$ is a training set and each $T_i$ is a test set. Let $N_1$ denote the neural network trained on $D_1$ in a conventional way. The goal of continual learning is at step $t$, retraining the network $N_{t-1}$ with the new training set $D_t$ such that it performs well on $T_t$ and avoids catastrophic forgetting, i.e., $N_t$ maintains its performance on $T_i$ with $ 1 \leq i \leq t-1$. Note that depending on the application, the new data may represent a slightly shifted distribution of the data or a completely different distribution (so that the neural network is retrained to perform new tasks). 

Note that it is typically assumed that it is infeasible to keep all the data $D_1, D_2, \cdots, D_{t-1}$ and train $D_t$ from scratch. For instance, the model could be sourced from a third-party such as from an open market or by a service provider. In some settings, we may assume the users can collect a small set of data that follows a distribution that is similar to the original training dataset, which is an assumption adapted in some of the existing continual learning methods~\cite{shin2017continual,van2018generative,rebuffi2017icarl,nguyen2017variational,kemker2017fearnet}.

Continual learning is relevant in safety-critical applications such as cyber-physical systems (e.g., for anomaly detection~\cite{doshi2020continual}). If the neural network must satisfy certain critical properties (e.g., reachability, robustness, and fairness), catastrophic forgetting is particularly worrisome as it means that the neural network is likely to violate the property along the way even if it is initially verified. In the literature, existing continual learning methods typically aim to address the problem of catastrophic forgetting by using regularizers~\cite{li2017learning,kirkpatrick2017overcoming}, introducing new neurons~\cite{rusu2016progressive,li2017learning}, or replaying old training data~\cite{shin2017continual,van2018generative,rebuffi2017icarl,nguyen2017variational,kemker2017fearnet}.

While the above-mentioned approaches may sometimes be empirically effective with respect to the accuracy, none of them focus on maintaining verified neural networks' properties through one or more retraining. 

\section{Problem Definition} \label{sec:prelim}
In this section, we define our problem. Our overall goal is to develop a continual learning approach such that a verified neural network is likely to remain correct.  Our high-level idea is to, given a neural network, maintain a certificate for each verified property and use the certificates to guide the retraining of the neural network.

\begin{definition}[Neural Networks]
A neural network is a function $N: \mathbb{R}^{p} \to \mathbb{R}^{q}$ which maps an input $x\in \mathbb{R}^{p}$ to an output $y \in \mathbb{R}^{q}$. It is composed of multiple layers, written as $N = f^{k-1} \circ f^{k-2} \circ \cdots \circ f^0$ where $f^l$ is the $l$-th layer (which could be an affine layer, a pooling layer, or an activation layer, e.g., ReLU, Sigmoid, and Tanh). Each layer contains multiple neurons. 
\end{definition}
Note that we assume that affline layers (i.e., a weighted sum plus certain bias) and activation layers are separated for the ease of discussion as in~\cite{singh2019abstract}. We write $x^{l+1}$ to denote the variables representing the output of those neurons at layer-$l$. Note that $x^0$ represents the input of the network. 

\begin{example}
    Hereafter, we use one ACAS Xu neural network adopted from~\cite{katz2017reluplex} and one neural network trained on the MNIST dataset as our running examples. The former is used as an airborne collision avoidance system, which generates one among five action advisories, i.e., no conflict, turn left weakly, turn right weakly, turn left strongly, and turn right strongly. It has five inputs, six hidden layers, each of which contains 50 neurons. The latter is used to classify hand-written digits. It has 784 inputs, three hidden layers, each of which contains 10 neurons. 
\end{example}

Our approach starts with a proposal to attach certificates with a verified neural network, one for each verified property. Formally, we define a certificate as follows in this work.

\begin{definition}[Neural Network Certificate]
Given a neural network $N = f^{k-1} \circ f^{k-2} \circ \cdots \circ f^0$, a neural network certificate $\Phi$ with respect to a property $\xi$ is a sequence $\langle \phi^0, \phi^1, \cdots, \phi^k \rangle$ where each $\phi^l$ is constraint over neurons at layer-$l$, i.e., $x^l$, satisfying the following conditions.
\begin{itemize}
    \item $\phi^l \land f^l(x^l) \implies \phi^{l+1}$ for all $l$;
    \item and $\phi^k \implies \xi$.
\end{itemize}
\label{def:cert}
\end{definition}
Note that $\phi^0$ can be used to encode the precondition of a property, i.e., the constraint must be satisfied by the input. Similar to the idea of using proof transfer to speed up the verification~\cite{ugare2022proof}, we require that the certificate includes the result of intermediate verification steps so that an independent third party can efficiently validate the certificate by checking whether $\phi^l$ for all $l$ is satisfied locally (rather than verifying the neural network again). More relevantly (to this work), such certificates allow us to guide the continual learning processes in multiple ways. 

Intuitively, each certificate captures the over-approximation of the network's layers according to each property. To be useful in our approach, we require that the certificates should not be too weak, e.g., $True$. We remark that such certificates can be readily generated using existing neural network verifiers such as AI$^2$~\cite{gehr2018ai2}, DeepPoly~\cite{singh2019abstract}, RefinePoly~\cite{singh2019beyond}, and $\alpha$,$\beta$-CROWN~\cite{zhang2018efficient,xu2021fast,wang2021beta,zhang2022general}. Furthermore, depending on the particular verification techniques used by the verifier, the constraint $\phi^l$ may take different forms, e.g., interval constraints or polyhedral constraints. 

\begin{example}
    A total of 10 safety properties for the ACAS Xu neural network are specified in~\cite{katz2017reluplex}. We focus on property 3 as described in Section~\ref{nnverify}. We use DeepPoly to verify property 3 and obtain the certificate, where $\phi^i$ is in the form of a concrete range for each neuron. A part of the certificate is shown below.
    \begin{align*}
    \phi^{13} & \equiv -3.3497 \leq x^{13}[0] \leq 1.7532 \wedge -5.6635 \leq x^{13}[1] \leq -4.7525 ~\wedge \\
    & 2.0411 \leq x^{13}[2] \leq 2.3862 \wedge 0.4508 \leq x^{13}[3] \leq 1.9311 ~\wedge \\
    & 1.1933 \leq x^{13}[4] \leq 4.2636
    \end{align*}
    where $\phi^{13}$ is the constraint on the output layer, based on which we can infer that the prediction is not 0. For the MNIST model, the initial model is trained to recognize digits 0 and 1 only. We use DeepPoly to verify local robustness based on 25 selected samples (of 0 and 1) with respect to a distance function $d$ defined based on $L_{\infty}$-norm with a threshold $\epsilon = 0.01$.  
    We thus have a total of 25 certificates for this network. For instance, the certificate associated with the first selected sample with label 1 with respect to the output layer is as follows.
    $$
    \phi^7 \equiv -3.6951 \leq x^7[0] \leq -3.4523 \wedge 2.5819 \leq x^7[1] \leq 2.7624
    $$
    Based on $\phi^7$, we can infer $x^7[1] > x^7[0]$, which means that the output is 1.
\end{example}

We are now ready to define our problem. Given a neural network $N$ with multiple certificates $\{\Phi_i\}$, assuming that we would like to retrain $N$ (i.e., continual learning), the problem is how to retrain $N$ such that the updated $N$ is likely to preserve the certificates $\{\Phi_i\}$. Furthermore, our method should support both ways of continual learning, i.e., one without introducing new neurons~\cite{kirkpatrick2017overcoming} and one with new neurons~\cite{rusu2016progressive,li2017learning}. We notice that our problem can be considered as the generalization of suppressing regression errors in neural networks \cite{tokui2022neurecover,you2023regression,li2023lightweight} because we want to preserve general properties instead of the correctness of individual samples.

\begin{example}
For the ACAS Xu model, we retrain it with samples generated based on the precondition of property 10. The idea is to further train the neural network so that it is capable of generating correct predictions under different circumstances, as well as to preserve property 3 after retraining (so that no neural network repairing is needed). Note that in this setting, we assume that no new neurons are introduced. 

The MNIST model is retrained with images of 2 and 3. Our goal is to update the network with the new data so it can accurately recognize 0, 1, 2, and 3, and remains robust with respect to the 25 selected samples.
Note that for this network, we assume that 10 new neurons are introduced for each hidden layer and two new neurons are introduced for the output layer.
\end{example}

\section{Our Approach} \label{sec:approach}
In this section, we present the details of our approach. The overall workflow of CCL is shown in Algorithm~\ref{alg:overall}. The inputs are a verified neural network $N$, its verification certificates $\{ \Phi_i \}$, and a set of new data $\{ (x_j^0, y_j) \}$. The output is a new model $M$ which is expected to be the result of retraining $N$ with the new data. \\
 
\begin{algorithm}[t]
    Inputs: a certified neural network $N$; certificates $\{\Phi_i\}$; and new training samples $\{ (x_j^0, y_j) \}$ \\
    Initialize $M$ based on $N$; \\
    Augment $ \{ (x_j^0, y_j) \}$ with synthesized data from $N$ and $\{\Phi_i\}$; \# \emph{data augmentation} \\
    \For{each epoch of a batch of new data} {
        Train $M$ with the objective: $L_{ce}(\theta, \{ (x_j^0, y_j) \}) + \alpha Reg(\theta, \{ (x_j^0, y_j) \}, \{\Phi_i\})$; \# \emph{regularization} \\
    }
    Clip $M$'s neuron weights according to $\{\Phi_i\}$ using Algorithm~\ref{alg:modelclipping}; \# \emph{model clipping} 
\caption{CCL}
\label{alg:overall}
\end{algorithm}

\noindent \textbf{Model Initialization} We start with initializing the new model $M$ based on the given certified model $N$. If no new neurons are introduced, $M$ is initialized to be identical to $N$. Otherwise, we initialize $M$ by copying all neurons from $N$ and add additional neurons to each layer of $N$ (except the input layer if the dimension of the input remains the same). The newly introduced neurons are initialized randomly. Note that these new neurons are not associated with any certificate.

\begin{example}
    Figure~\ref{fig:network} illustrates the initialized neural network for the MNIST model. On the left, we have the old model which is trained to recognize 0 and 1. The input layer has 784 neurons and each hidden layer has 10 neurons. On the right, we have a new model which is trained to recognize four digits 0, 1, 2, and 3. Each hidden layer has 10 more neurons and the output layer has two more neurons. The blue circles and arrows represent the old neurons and weights, which are copied from the old network. The red circles and arrows show the new neurons and weights, which are initialized randomly. We connect old neurons with new neurons so that the new neurons can make use of the features that are learned by the old neurons. We do not connect new neurons to the old neurons so that the new features that are learned by the new neurons do not affect the old neurons. As shown in~\cite{rusu2016progressive}, such a way of continual learning reduces the effect of catastrophic forgetting. 
\end{example}

\begin{figure}[t]
\centering
\includegraphics[scale=0.055]{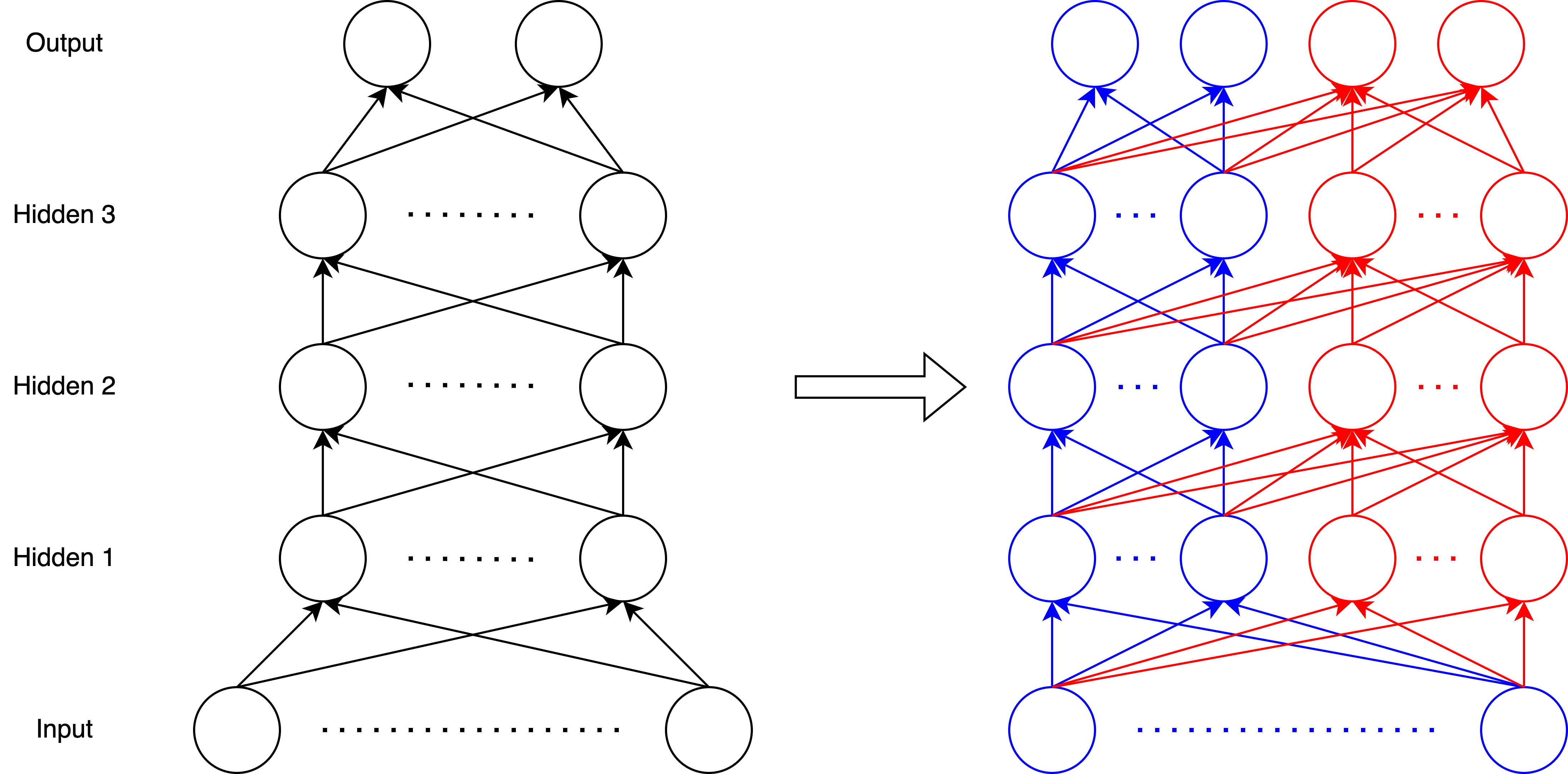}
\caption{An example of model initialization}
\label{fig:network}
\end{figure}

\noindent \textbf{Certificate-based Data Augmentation} Next, we have a lightweight step that generates some data based on $N$ and $\{\Phi_i\}$ to enrich $\{ (x_j^0, y_j) \}$. This aims to reduce the effect of catastrophic forgetting with regard to verified properties. Note that this step is particularly useful if we do not have any of the original training data, and is shown to be useful even if part of the original training data is available (refer to Section~\ref{sec:eval}). This step generates the data as follows. For each certificate $\Phi_i$, we sample uniformly according to the constraint $\Phi_i^0$ (which constraints $x^0$). Each input is then fed into $N$ to collect the labels according to $N$. 

\begin{example}
    For our running example, it works as follows. For the ACAS Xu network, we use $\phi^0$ of property 3 to generate 5 samples, which are then fed into the old network to collect the labels before adding them into the new training data. For the MNIST network, for each certificate (for the local robustness with respect to a selected sample $x$), we generate 10 samples according to the corresponding $\phi^0$. By definition of local robustness, these samples all have the same label as $x$. They are then used to enrich the new training data.  
\end{example}

We remark that while certificate-based data augmentation is helpful in reducing the effect of catastrophic forgetting (and consequently in maintaining the certificates), it must be applied with caution. This is because the data synthesized based on the certificates are likely to follow a distribution that is different from the original training data, and might alter the behavior of the neural network unexpectedly. We thus keep the number of synthesized samples small, e.g., no more than 10 for each certificate. \\ 

\noindent \textbf{Certification-based Regularization} Next, CCL iteratively trains $M$ epoch-by-epoch with the standard cross-entropy loss as well as a \emph{certificate-based regularizer}. The optimization problem is defined as follows.
\[
\argmin_{\theta}~L_{ce}(\theta,\{ (x_j^0, y_j) \}) + \alpha * Reg(\theta, \{ (x_j^0, y_j) \}, \{\Phi_i\})
\]
where $L_{ce}(\theta,\{ (x_j^0, y_j) \})$ is the standard cross-entropy loss, $\alpha$ is a constant hyperparameter, and $Reg(\theta, \{ (x_j^0, y_j) \}, \{\Phi_i\})$ is the certi-ficate-based regularizer. Intuitively, the certificate-based regularizer is used to discourage updating the weights of neurons that are associated with the certificates so that the certificates are unlikely to break. The certificate-based regularizer is defined as follows. 
\[
Reg(\theta, \{ (x_j^0, y_j) \}, \{\Phi_i\}) = \Sigma_i \Sigma_j \rho(x_j^0, y_j, \Phi_i)
\]
where $\rho(x_j^0, y_j, \Phi_i)$ is defined as follows.
\begin{equation*}
  \rho(x_j^0, y_j, \Phi_i) =
    \begin{cases}
      0 & \text{if $x_j^0 \notimplies \Phi_i^0$}\\
      \Sigma_{l=1}^k (dist(f^{l-1} \circ \cdots f^0(x_j^0), \phi_i^l)) & \text{otherwise}
    \end{cases}       
\end{equation*}
where $dist(v, \varphi)$ is a metric for measuring how far a variable valuation $v$ is from satisfying a constraint $\varphi$. Intuitively, $\rho(x_j^0, y_j, \Phi_i)$ is defined such that if the input $x_j^0$ is irrelevant to the certificate $\Phi_i$ (i.e., not satisfying its precondition), then its value is 0 (i.e., no penalty); otherwise, we evaluate how much the constraints from the certificate are violated layer-by-layer and use the sum as the penalty. 

Note that such a distance $dist(v, \varphi)$ can be defined in different ways
based on what is the form of constraint $\phi_i$ used in the certificate, i.e., the abstraction domain. For instance, if $\phi_i$ is in the form of concrete intervals $[lw, up] $ for each neuron, we can define the distance as follows (so that the training can be done efficiently).
\begin{equation*}
  dist(v, [lw, up]) =
    \begin{cases}
      (v - ((lw + up) / 2))^2 & \text{if $v < lw$ or $v > up$}\\
      0 & \text{otherwise}
    \end{cases}       
\end{equation*}

In the case that $M$ contains newly introduced neurons, we add a further regularizer so as to discourage changing the existing neurons altogether. Let $\theta_0$ be the parameters of $N$ before the retraining. The following is the optimization problem to be solved through training. 
\begin{align*}
 \argmin_{\theta} L_{ce}(\theta,\{ (x_j^0, y_j) \}) ~+~ & \alpha * Reg(\theta, \{ (x_j^0, y_j) \}, \{\Phi_i\}) \\
  +~ & \beta * \Sigma_{p \in \theta_0}(\theta_0(p) - \theta(p))^2
\end{align*}
where $\beta$ is another constant hyper-parameter, $p$ is any parameter of the old neural network and $\theta(p)$ denotes the value of parameter $p$ given the current neural network parameter $\theta$. The intuitive idea is that this way the old neurons are encouraged to remain unchanged whereas the new neurons are responsible for the new task.
We remark that simply demanding that the old neurons cannot be modified at all is not a good idea as those neurons may capture out-of-dated features and need to be lightly tuned to better capture new features in the new training data~\cite{rusu2016progressive}. \\

\noindent \textbf{Certificate-based Model Clipping} Note that the regularization does not guarantee that the old neurons are not modified or even modified beyond the certificates. Thus, CCL has a last step to check whether the certificates remain valid.
Note that checking whether a certificate is valid can be done efficiently, i.e., by checking whether $\phi^l \land f^l(x^l) \implies \phi^{l+1}$ is satisfied for all $l$ locally.
If the certificate is not valid, we re-verify the corresponding property. If the property is successfully re-verified, we update the certificate accordingly.
Otherwise, we try to restore the property by `clipping' the relevant neurons' weights according to the old certificate as follows.

If the certificate is broken, ideally we would like to repair the neural network by solving the following optimization problem. 
$$\argmin_{\theta'} \Sigma_{p \in \theta} (\theta(p) - \theta'(p))^2$$ such that $$\forall i,l.~\phi_i^l \land f^l(x^l) \implies \phi_i^{l+1}$$
where $\theta'$ is the new network parameters. That is, we would like to modify the neural network parameters minimally such that the certificates are restored. In practice, solving this (global) optimization problem is challenging and often impractical. However, since we assume that the certificates are in form of constraints for each neuron, we can solve this problem locally, i.e., by minimally modifying the neurons whose constraints according to the certificate are broken so that the certificate becomes valid. Furthermore, this optimization problem can often be simplified based on the abstraction domain used for $\phi_i$. In the following, we show how we solve the problem if $\phi_i$ is in the form of concrete intervals. 

\begin{algorithm}[t]
    Inputs: a retrained neural network $N$; certificates $\{\Phi_i\}$; \\
    Let $newCert$ be $\{\}$; \\
    \For{each certificate $\Phi_i$ in $\{\Phi_i\}$} {
        Add $\Phi_i$ to $newCert$; \\
        \For{each neuron $x$ of an affine layer in $N$ such that  $x = w * x_i + b$} {
            Let $[lw_j, up_j]$ be the constraint associated with $x$ with respect to each $\Phi_j$ in $newCert$; \\
            Let $max_{lw}$ be $max(lw_j - min(w * x_i))$; \\
            Let $min_{up}$ be $min(up_j - max(w * x_i))$; \\
            \If{$max_{lw} < min_{up}$} {
                Set the bias $b$ to be $(max_{lw} + min_{up}) / 2$; \\    
            }
            \Else {
                Remove $\Phi_i$ from $newCert$; \\
                Revert the biases' changes; \\
                Break; \\
            }
        }
    }
    Let $newCert$ be the new certificates;
\caption{Model clipping for interval constraints}
\label{alg:modelclipping}
\end{algorithm}

Algorithm~\ref{alg:modelclipping} shows how we solve the problem by focusing neurons of the affine layers. Note that for many neural networks, most of the parameters in $\theta$ are associated with the affine layers in the form of weights and biases. There may be additional parameters such as the parameter of a parametric ReLU layer in some neural networks. Although they are not the focus of the discussion hereafter, they can be supported in principle by our approach. 

For simplicity, in the following, we suppose there is only one property and one certificate (Algorithm~\ref{alg:modelclipping} shows the general case with multiple properties and multiple certificates). Consider a neuron of an affine layer $x = w * x_i + b$ where $w$ is a vector representing the weights of the neuron, $b$ is the bias, and $x_i$ are the vector of neurons of the previous layer. Assume that the certificate associated with $x$ is $[lw, up]$ which is violated. Our problem is thus to modify either $w$ or $b$ such that $lw \leq v = w * x_i + b \leq up$. For simplicity, assume that we are to focus on modifying $b$. The new value of $b$ must satisfy $lw - min(w * x_i) \leq b \leq up - max(w * x_i)$. Note that $min(w * x_i)$ and $max(w * x_i)$ can be easily computed based on the concrete values of $w$ and the concrete intervals of $x_i$ (which is a part of the certificate). 

Based on the above discussion, the model clipping is performed as follows. First, we find a neuron $x$ whose constraint according to the certificate is violated. We then compute $lw - min(w * x_i)$ and $up - max(w * x_i)$ for the neuron. If $lw - min(w * x_i) < up - max(w * x_i)$, we set the bias of the neuron to be $(lw - min(w * x_i) + up - max(w * x_i)) / 2$, which guarantees that the neuron satisfies its certificate. Note that we can similarly fix the neuron by modifying $w$. After fixing the neuron, we then check whether all the certificates are valid. If it is not the case, we repeat the above process until all parts of the certificates are satisfied. It is straightforward to see that if we are successful in ``fixing" all the neurons, the neural network is guaranteed to satisfy the verified properties. In the case of Algorithm~\ref{alg:modelclipping}, any certificate that remains is guaranteed to be a valid certificate.  

We remark that the quality of the certificates makes a difference in the above-mentioned approach, i.e., the more relaxed the certificate is (e.g., the larger the interval $[lw, up]$ is), the more likely that we can fix the neurons locally. 
In other words, if the certificate is very tight, the above-mentioned model clipping approach may fail to find a solution.
In the following, we present an approach to relax the certificate so that the above-mentioned model clipping is more likely to succeed. \\ 

\noindent \textbf{Certificate Relaxation} The problem of obtaining a relaxed condition which guarantees certain postcondition is satisfied is a known problem in the system/software verification community, and goes by the name of weakest-precondition computation in Hoare Logic~\cite{hoare1969axiomatic} or Craig interpolation~\cite{craig1957three}. In this work, we apply Craig interpolation to obtain a relaxed certificate for the neurons to be fixed before applying the above-mentioned model clipping approach. Formally, 

\begin{definition}[Craig interpolation]
Let $P$ and $Q$ be two first-order formulae such that $P \implies Q$. A Craig interpolant for $P$ and $Q$ is a formula $C$ that satisfies the following conditions:
\begin{itemize}
    \item $P \implies C$ and $C \implies Q$.
    \item All of the variables in $C$ also appear in both $P$ and $Q$.
\end{itemize}
\end{definition}


In the following, we explain how we can apply Craig interpolation to relax a certificate systematically. Based on Definition~\ref{def:cert}, we can deduce the following, for each property $\xi$.
$$
\bigwedge_{l=0}^{k-1}{(\phi^l \land f^l(x^l) \implies \phi^{l+1})}\land \phi^k \implies \xi
$$
Thus, we can systematically relax each $\phi^{l+1}$ one-by-one, starting with $\phi^k$. That is, since $\phi^k \implies \xi$, we compute the Craig interpolant for $\phi^k$ and $\xi$, which we denote as $\varphi^k$. Since $\phi^{k-1} \land f^{k-1}(x^{k-1})  \implies \phi^k \implies \varphi^k$, we then compute the Craig interpolant for $\phi^{k-1} \land f^{k-1}(x^{k-1})$ and $\varphi^k$, which we denote as $\varphi^{k-1}$. Note that in general $f^{k-1}(x^{k-1})$ can be a complicated non-linear function. We thus apply existing abstraction techniques such as DeepPoly to construct an over-approximation of $f^{k-1}(x^{k-1})$, denoted as $O_f^{k-1}(x^{k-1})$, and compute the interpolant $\varphi^{k-1}$ for $\phi^{k-1} \land O_f^{k-1}(x^{k-1})$ and $\varphi^k$. The soundness of this step is straightforward. 

After having the interpolant, we use it to relax the constraint associated with the neurons according to the certificate. In the following, we show how it is done for the interval abstraction domain (which is adopted in our implementation). In general, the interpolant that we obtain (using MathSAT~\cite{mathsat5}) is in the form of a conjunction of linear constraints. For each variable $x^i[j]$, through transformation, we can obtain a formula in form of $\bigwedge x^i[j] \bowtie \varphi^i_j$, which $\bowtie$ is $\geq$ or $\leq$. Let $[lw, up]$ be the interval associated with $x^i[j]$. We can relax the constraint associated with $x^i[j]$, i.e., increase $up$ or decrease $lw$, as much as possible as long as the above formula is satisfied. In practice, we adopt a heuristic to increase (or decrease) the upper (or lower) bound using a factor of $K$ only, i.e., $up$ is set to be $up + K * |up|$ (or $lw$ is set to be $lw - K * |lw|$) each time. This is because relaxing the intervals would (1) have the effect of strengthening the certificate and (2) may reduce the accuracy of the network after model clipping (since it gives too much room for the new parameters). For these reasons, by default, we set $K = 1$. 

Theoretically, we can keep applying the relaxation and clipping all neurons layer-by-layer until the input layer and restoring the certificate (as well as the property). However, doing that may reduce the accuracy of the retrained network, and thus we only apply the relaxation and clipping on the ``important'' neurons at the output layer to restore the property without affecting the network accuracy too much.


\begin{example}
\label{exp:inter}
    For our running example, the MNIST network's properties remain valid after the retraining, i.e., it remains locally robust with respect to the 25 samples. This is mainly due to the certificate-based data augmentation step (see detailed experimental settings and results in Section~\ref{sec:eval}). However, the ACAS Xu network's property 3 is broken after retraining, and thus model clipping is necessary. Assume that we attempt to fix the model using the above-mentioned approach (on the last dense layer).
    The values of $lw - min(w * x_i)$ and $up - max(w * x_i)$ are as follows.
    \begin{align*}
    lw - min(w * x_i) & = [-0.1272, 0.6206, 1.3362, 0.6066, 0.9983]\\
    up - max(w * x_i) & = [-1.1367, -2.4329, -0.0455, 0.1117, 0.2636]
    \end{align*}
    As we can see, unfortunately, all the neurons in this layer satisfy $lw - min(w * x_i) > up - max(w * x_i)$ and so we cannot find a new value of $b$ to restore the certificate. 

Continue with the above example of ACAS Xu network. 
We have
\begin{align*}
    \phi^{13} & \equiv -3.3497 \leq x^{13}[0] \leq 1.7532 \wedge -5.6635 \leq x^{13}[1] \leq -4.7525 ~\wedge \\
    & 2.0411 \leq x^{13}[2] \leq 2.3862 \wedge 0.4508 \leq x^{13}[3] \leq 1.9311 ~\wedge \\
    & 1.1933 \leq x^{13}[4] \leq 4.2636 \\
    \xi & \equiv x^{13}[0] < x^{13}[1] \vee x^{13}[0] < x^{13}[2] \vee
    x^{13}[0] < x^{13}[3] ~\vee \\
    & x^{13}[0] < x^{13}[4]
\end{align*}
where $\phi^{13}$ is the constraint on the last layer according to the certificate and $\xi$ is the property (i.e., the label is not 0). Using MathSAT, we obtain the interpolant $C \equiv x^{13}[2] \ge x^{13}[0] + 0.2880$. The interpolant means the certificate is still valid if we increase the upper bound of $x^{13}[2]$ and decrease the lower bound of $x^{13}[0]$. We then apply relaxation and set the upper bound of $x^{13}[2]$ to $2.3862 + 1 * |2.3862|$ (i.e., double the old value). Similarly, we set the lower bound of $x^{13}[0]$ to $-3.3497 - 1 * |-3.3497|$. With new intervals, we succeed in finding the new biases for $x^{13}[0]$ and $x^{13}[2]$. The network after fixing satisfies property 3 as expected. We also notice that using the above approach to fix the network may have an impact on its accuracy. In this example, the accuracy reduces slightly from 86.2\% to 85.2\%. 
\end{example}


{\color{black} \noindent \textbf{Soundness} As we explained in the previous sections, the regularization does not guarantee the properties will be preserved. The soundness of CCL resides in the re-verification and model clipping. The soundness of the re-verification is trivial. So, we focus on the model clipping and have the following theorem.

\begin{theorem}[Soundness]
All the remaining certificates after the model clipping are valid.
\end{theorem}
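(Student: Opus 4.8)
The plan is to prove the statement by induction on the certificates processed by Algorithm~\ref{alg:modelclipping}, carrying the invariant: \emph{after the outer loop has finished processing certificate $\Phi_i$, every certificate currently in $newCert$ is valid — i.e.\ satisfies the two conditions of Definition~\ref{def:cert} — with respect to the network in its current, partially clipped state.} Since the network finally produced by CCL differs from the retrained network only through the bias updates carried out inside Algorithm~\ref{alg:modelclipping}, instantiating this invariant at termination of the outer loop yields exactly the theorem.

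First I would record a decomposition of what ``valid'' demands. By Definition~\ref{def:cert}, $\Phi_i$ is valid for a network iff (a) $\phi_i^l \land f^l(x^l) \implies \phi_i^{l+1}$ for every layer $l$, and (b) $\phi_i^k \implies \xi$. Condition (b) is a statement purely about the certificate constraints and the property: it does not mention the network parameters, it held for the original certificate (and if certificate relaxation was applied, the relaxed $\varphi^k$ still satisfies $\varphi^k \implies \xi$ by the defining property of the Craig interpolant), hence it is untouched by clipping. Among the implications in (a), those at activation layers (ReLU, Sigmoid, Tanh) involve no parameter that Algorithm~\ref{alg:modelclipping} modifies, so they held before and still hold. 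For an affine layer $f^l$ the implication splits neuron by neuron: for an output neuron $x = w\cdot x_i + b$ with certificate interval $[lw,up]$, the per-neuron implication is exactly $lw - \min(w\cdot x_i) \le b \le up - \max(w\cdot x_i)$, where $\min$ and $\max$ are taken over the box given by the certificate's interval constraint on $x_i$ and depend only on $w$ and that box, not on $b$. The crucial consequence is that resetting the bias of one affine neuron can affect only that neuron's own implication and nothing else in (a) or (b).

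Second, I would verify the per-step correctness of the bias update together with the revert-on-failure logic. When Algorithm~\ref{alg:modelclipping} handles an affine neuron $x$ during the processing of $\Phi_i$, it forms $max_{lw} = \max_j\bigl(lw_j - \min(w\cdot x_i)\bigr)$ and $min_{up} = \min_j\bigl(up_j - \max(w\cdot x_i)\bigr)$, with $j$ ranging over the certificates in $newCert$ and the interval bounds taken from $\Phi_j$. If $max_{lw} < min_{up}$, it sets $b := (max_{lw}+min_{up})/2$, which lies strictly between the two quantities and therefore satisfies $lw_j - \min(w\cdot x_i) \le b \le up_j - \max(w\cdot x_i)$ for \emph{every} $\Phi_j$ in $newCert$; so after this update the neuron's implication holds for all those certificates simultaneously. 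If instead $max_{lw} \ge min_{up}$, no single bias makes all of them feasible at this neuron, and the algorithm removes $\Phi_i$ from $newCert$ and reverts every bias it changed while processing $\Phi_i$, restoring the network to precisely the state it had immediately after processing $\Phi_{i-1}$.

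Finally I would assemble the induction. The base case is vacuous ($newCert = \emptyset$ before anything is processed). For the step, assume the invariant after $\Phi_{i-1}$. Processing $\Phi_i$ ends in one of two ways: if all affine-neuron checks in its inner loop succeed, then at the end of the iteration every affine neuron satisfies its interval constraint with respect to every $\Phi_j \in newCert$ (the previously kept certificates are re-examined jointly with $\Phi_i$ at each neuron), so by the decomposition above every certificate in $newCert$ is valid; if some check fails, $\Phi_i$ is dropped and the biases reverted, leaving the network and $newCert$ exactly as after $\Phi_{i-1}$, where the invariant held by hypothesis. Either way the invariant is preserved, completing the induction. I expect the main obstacle to be making airtight the claim that a bias update performed while processing a \emph{later} certificate cannot silently invalidate an earlier kept one: this rests on observing that any kept certificate stays in $newCert$ throughout, hence is always included in the $\max/\min$ whenever a bias is recomputed, together with the revert step guaranteeing that a failure never leaves the network in a worse state than the last known-good one. (The case where the neuron is fixed by modifying $w$ instead of $b$ is entirely analogous.)
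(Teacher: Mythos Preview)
Your proof is correct and follows the same essential approach as the paper: both argue that the clipping procedure guarantees $\phi_i^l \land f^l(x^l) \implies \phi_i^{l+1}$ for every remaining certificate and every layer, which by Definition~\ref{def:cert} is exactly validity. The paper's own proof is a one-sentence assertion of this fact, whereas you supply the detailed inductive argument over the outer loop of Algorithm~\ref{alg:modelclipping} with careful case analysis of the success/revert branches and the per-neuron decomposition of the affine-layer implication; your version is substantially more rigorous, but the underlying idea is identical.
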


\begin{proof} 
For each certificate that remains after the model clipping, we have $\forall i,l.~\phi_i^l \land f^l(x^l) \implies \phi_i^{l+1}$ due to the clipping procedure (including the output layer), which guarantees the certificate (and so the corresponding property) to be valid.
\end{proof}
}
\section{Evaluation} \label{sec:eval}
We have implemented the above-introduced CCL approach in the Socrates framework~\cite{pham2020socrates} (which is an open-source project for experimenting with all kinds of neural network analysis techniques), with a total of 1,100 lines of Python code (excluding external libraries as well as libraries offered by the framework). Our implementation uses DeepPoly as a verification engine to generate certificates in the form of intervals for each neuron and uses MathSAT to generate Craig interpolants. We remark that our approach can be configured to work with alternative verification engines.  

\subsection{Experimental Subjects}
To evaluate our approach, we conduct multiple experiments using standard models trained on four popular datasets, i.e., ACAS Xu, MNIST, CIFAR10, and Census. These are chosen because the corresponding models are popular subjects for neural network verification and are associated with different properties, i.e., reachability, robustness, and fairness. Table~\ref{tbl:models} summarizes some relevant details of the models, i.e., the number of neurons, the testing accuracy, the number of properties, the property type, and the value of the hyper-parameters (when they are relevant). \\

\noindent \emph{The ACAS Xu dataset} is used to train an automatic airborne avoidance system. 
In~\cite{katz2017reluplex}, the authors provide 45 neural networks trained on the ACAS Xu dataset and analyzed these neural networks against 10 safety properties. To evaluate CLL, we train initial models such that property 3 is satisfied and retrain the models subsequently. Note that we choose property 3 because it is satisfied by most of the pre-trained networks. We use the same network architecture as in the pre-trained networks. The training process takes 200 epochs and we choose the network with the highest accuracy. In the end, we obtain 45 networks, 42 of which are certified with the property 3, exactly like in~\cite{katz2017reluplex}. We then use the precondition of property 10 to generate new training data. This property is chosen because it has a post-condition which is the opposite of that of property 3. We then retrain the 42 networks which are certified with property 3 with these new data. The retraining uses 20 epochs and we choose the networks with the highest accuracy as the final networks.\\

\noindent \emph{The MNIST dataset} contains handwritten greyscale digit images from 0 to 9. We divide the training set (and the test set) into two groups based on the labels, i.e., the first group contains the images with labels from 0 to 4 and the second one contains the rest of the images. We then train a network using the first group of images. The network is a fully-connected one that has three hidden layers. Each hidden layer has 25 neurons and the output layer has five neurons. 
Once the network is trained, we verify the robustness of the network against 25 randomly selected samples in the test set and obtain the certificates accordingly. The robustness is verified based on an $L_\infty$-norm with an $\epsilon$ of $0.01$. To retrain the neural network with the group of images with 5 to 9, we extend each of the hidden layers with 25 new neurons and the output layer with 5 new neurons and apply CCL. The goal is to evaluate whether the retrained neural network remains robust with respect to the 25 samples and furthermore whether the network has high accuracy in recognizing all 10 digits. For both the initial training and the retraining, we use 20 epochs and choose the network with the highest accuracy. \\

\noindent \emph{The CIFAR10 dataset} contains colored images, which show objects belonging to one of the 10 classes. We similarly divide the CIFAR10 dataset into two groups, i.e., the first one contains the first five classes of objects and the second one contains the rest. We use the same network architecture as the one for the MNIST dataset, except each hidden layer has 250 neurons and is extended with 250 new ones in the retraining. We first train a network with the training data from the first group. Afterward, we apply DeepPoly to obtain the certificates for the local robustness of the neural network with respect to 25 samples randomly selected from the test set. Lastly, we retrain the network with the second group. For both the initial training and the retraining, we use 50 epochs and choose the network with the highest accuracy. Due to the high-resolution images in CIFAR10, the neural network obtained this way is expectedly less accurate and less robust than the MNIST neural networks. Thus, we use an $L_\infty$-norm of $0.001$ for the local robustness verification. \\

\noindent \emph{The Census dataset} contains features regarding certain information about adult individuals. The feature gender is considered a sensitive feature. The task is to train a binary classifier that predicts whether an individual owns more than USD 50,000 annually. We adopt the same model used in the previous studies~\cite{zhang2020white}. We train a network using all training samples, and then randomly select 10 samples from the test set to obtain certificates on fairness. To obtain a set of new data to retrain the neural network, we apply an existing fairness testing engine ADF~\cite{zhang2020white} to generate individual discriminatory instances, and retrain the network with them. The goal of the study is to evaluate whether fairness properties that are satisfied by the original neural network might be violated after such retraining and whether CCL can be applied to address the problem. For both the initial training and the retraining, we use 20 epochs and choose the network with the highest accuracy. \\

\begin{table*}[t]
\begin{center}
\resizebox{0.6\textwidth}{!}{
\begin{tabular}{| l | c | c | c | c | c | c | c |}
\hline
 Models & \#Neurons & Learn. Rate & Acc. (\%) & \#Prop. & Prop. Type & $\alpha$, $\beta$ \\
 \hline
 ACAS Xu & 310         & 0.01 & 89.4 & 42 & Reachability & 0.001  \\ 
 MNIST   & 864-944     & 0.01 & 98.8 & 25 & Robustness   & 0.001  \\ 
 CIFAR10 & 3827-4582   & 0.01 & 65.3 & 25 & Robustness   & 0.001  \\
 Census  & 139         & 0.1 & 84.8 & 10 & Fairness      & 0.001  \\
 \hline
\end{tabular}
}
\caption{Details of the ACAS Xu (42 models), MNIST, CIFAR10, and Census models}
\label{tbl:models}
\end{center}
\end{table*}

\noindent \emph{Baselines} Since our approach is the first method to support certified continual learning, we compare it with the following baseline approaches in our experiments.
\begin{itemize}
    \item \emph{Baseline}: This is the approach that applies ordinary continual learning (without any of the original training data).
    \item \emph{Baseline+DS}: To understand the relevance of certificate-based data augmentation, we evaluate the effect of simply applying certificate-based data augmentation without applying the remaining components of CCL using the approach called \emph{Baseline+DS}. To keep the number of synthesized data small, we synthesize 5 samples for the ACAS Xu models; and 10 samples for each certificate for the remaining models (e.g., with the MNIST model, we have 25 robust certificates and thus a total of 250 samples are synthesized). Intuitively, few samples are synthesized for the ACAS Xu models because the models and the property are considered simpler compared to the rest.
    \item \emph{Baseline+OD}: This is the approach that applies ordinary continual learning (with 20\% of the original training data). Note that this is one optimistic continual learning setting that was adopted in some existing approaches~\cite{shin2017continual,van2018generative,rebuffi2017icarl,nguyen2017variational,kemker2017fearnet}.
    \item \emph{CCL}: This is the approach where CCL as described in the previous sections is applied. 
    \item \emph{CCL+OD}:  This is the approach where 20\% of the original training data is assumed to be available and is used together with CCL.
\end{itemize}

\subsection{Research Questions and Answers}
In the followings, we answer multiple research questions (RQs) based on the experimental results. All experiments are run on a machine with 2.6 GHz 6-Core CPU and 16 GB. We did not use GPU in the experiments. For reproducing the experimental results, all the trained models, datasets, verified properties, and generated certificates are made publicly available at~\cite{ccl}. \\

\begin{table*}[t]
\begin{center}
\resizebox{0.8\textwidth}{!}{
\begin{tabular}{| l | c | c | c | c | c | c | c | c | c | c | c |}
\hline
 \multirow{2}{*}{Models} & \multirow{2}{*}{OAcc. (\%)} & \multicolumn{2}{c|}{Baseline} & \multicolumn{2}{c|}{Baseline+DS} & \multicolumn{2}{c|}{Baseline+OD} & \multicolumn{2}{c|}{CCL} & \multicolumn{2}{c|}{CCL+OD} \\
 \cline{3-12}
 & & Prop. & Acc. (\%) & Prop. & Acc. (\%) & Prop. & Acc. (\%) & Prop. & Acc. (\%) & Prop. & Acc. (\%) \\
 \hline
 ACAS Xu & 89.6 & 37 & 89.3 & 41 & 89.5 & 42 & 90.8 & 42 & 89.4 & 42 & 90.7  \\ 
 MNIST   & 98.8 & 0  & 46.9 & 25 & 75.3 & 23 & 93.0 & 25 & 74.7 & 25 & 93.2  \\ 
 CIFAR10 & 65.3 & 0  & 37.0 & 25 & 29.6 & 4  & 46.8 & 25 & 33.0 & 25 & 39.8  \\
 Census  & 84.8 & 10 & 81.9 & 10 & 83.5 & 10 & 84.1 & 10 & 82.6 & 10 & 84.1  \\
 \hline
\end{tabular}
}
\caption{Results on the ACAS Xu (42 models), MNIST, CIFAR10, and Census models}
\label{tbl:other}
\end{center}
\end{table*}

\noindent \emph{RQ1: Does CCL preserve the properties after retraining a neural network?} To answer this question, we systematically apply the above-mentioned five approaches to all the models. 
The results are summarized in Table~\ref{tbl:other}, where column \emph{Model} shows the name of the dataset that the model is trained on, column \emph{OAcc.} shows the accuracy of the original network before retraining, column \emph{Prop.} shows the number of properties that are preserved by the retraining and column \emph{Acc.} shows the accuracy of the network after retraining.  
Note that the row named \emph{ACAS Xu} is the combined results of 42 models with the average accuracy.

There are multiple observations that we can make based on the results. First, it can be observed that indeed the verified properties are likely to fail after retraining without applying CCL, i.e., for the ACAS Xu models, 5 out of the 42 models fail to preserve the verified property; and none of the 50 properties for the MNIST and CIFAR10 are preserved. The only exception is the Census model, where all 10 properties are preserved. We conjecture that how likely it happens depends mainly on the distribution of the new data. That is, the more different the new data distribution is, the more catastrophic forgetting occurs and thus the properties are violated. This is most observable in the CIFAR10 model (where the new data follows a completely different distribution from that of the original one), whose accuracy drops significantly and all of its certificates are broken after retraining. On the contrary, most of the properties are preserved in the case of the ACAS Xu models and the Census model as the new data distribution is similar to the original one. 

Second, it can be observed that the certificates can help even if only a lightweight approach such as certificate-based data augmentation is applied. The results of \emph{Baseline+DS} show that almost all properties can be successfully re-verified when certificate-based data augmentation is applied. Comparing the results of \emph{Baseline+DS} and \emph{Baseline+OD}, it can be observed that certificate-based data augmentation works much better than having part of the original training data in terms of preserving the properties. This is because the original training data rarely satisfies the precondition required by the properties, especially when the new data follows a very different distribution.

Thirdly, CCL successfully preserves all the properties in our experiments. Furthermore, CCL achieves that mostly based on lightweight approaches such as data augmentation and regularization. Only for one of the ACAS Xu models, certificate relaxation and model clipping are applied successfully to restore the property. We remark that if certificate-based data augmentation is not applied, model clipping is likely applied much more frequently. Our suggestion, however, is to apply certificate-based data augmentation whenever possible so that the heavier model clipping is applied only occasionally. 

While the objective of CCL is to preserve the properties, it can be observed that it has some positive effect on the model accuracy as well, i.e., it helps to improve the accuracy of the MNIST model significantly, and improve the accuracy of the Census model slightly. Although it seems to reduce the accuracy of the CIFAR10 model, our investigation shows that \emph{Baseline} achieves a better accuracy only because it completely forgets about the original training set in this case (and it so happens that the model has low accuracy in general), whereas CCL prevents it from doing so. Overall, the effect of CCL improving accuracy is relatively minor compared to the two dominating factors, i.e., whether the new data distribution is similar to that of the original and whether some part of the original training data is available. Contrasting the results on the ACAS Xu models and the Census model (i.e., the distribution of the original and new data is similar) with those of the MNIST and CIFAR10 models (i.e., the distribution of the original and new data is different), we can observe that the more similar the new data's distribution is to the original training data, the less catastrophic forgetting there is and thus the less the accuracy drop is. 
Contrasting the results of \emph{Baseline} and \emph{Baseline+OD}, as well as \emph{CCL} and \emph{CCL+OD}, we can observe that having some original data definitely improves the accuracy. \\

\begin{table*}[t]
\begin{center}
\resizebox{0.52\textwidth}{!}{
\begin{tabular}{| l | c | c | c | c | c |}
\hline
 Models & Baseline & Baseline+DS & Baseline+OD & CCL & CCL+OD \\
 \hline
 ACAS Xu & 1m17s & 1m19s & 1m29s & 1m24s & 1m32s \\
 MNIST   & 1m12s & 1m11s & 1m23s & 1m13s & 1m24s \\
 CIFAR10 & 7m26s & 7m35s & 8m39s & 7m57s & 9m7s \\
 Census  & 0m3s & 0m2s & 0m4s & 0m3s & 0m4s \\
 \hline
\end{tabular}
}
\caption{Training time on ACAS Xu (42 models), MNIST, CIFAR10, and Census models}
\label{tbl:time}
\end{center}
\end{table*}

\noindent \emph{RQ2: Is the overhead of CCL significant?} To answer this question, we record the training time with and without applying CCL. The results are shown in Table~\ref{tbl:time}.
We observe the same pattern across all the models. The \emph{Baseline} and the \emph{Baseline+DS} approaches have the least training time, since they use the ordinary training method and have the least training data (recall that only a handful of samples are synthesized for each certificate). The \emph{CCL} approach spends more time on training than the \emph{Baseline} and the \emph{Baseline+DS} approaches due to its more complex training method.
Note that the \emph{CCL} approach uses less training time than the \emph{Baseline+OD} approach because the latter uses much more training data (e.g., for the CIFAR10 dataset, the 20\% of original training data is about 5,000 samples while synthesized samples are only 250). The results imply that the amount of training data has a more noticeable impact on the training time than CCL.
Finally, as expected, the \emph{CCL+OD} approach uses the most training time. Based on the results, we conclude that CCL's overhead is negligible in practice.
 \\

\noindent \emph{RQ3: How does the certificate relaxing and model clipping affect the accuracy of the model?}
Recall that certificate relaxing and model clipping are applied only when it is necessary, i.e., the property is violated after training, in which case we update the model parameters slightly based on a hyperparameter $K$ to restore the correctness of the model. Updating the model parameters certainly would have an impact on the model's accuracy. We thus would like to check whether the impact is severe. Fortunately (thanks mainly to certificate-based data augmentation), in all of our experiments (with a total of 45 models), there is only one model that needs to be clipped. The details of the case have already been discussed in Example~\ref{exp:inter}. In this experiment, we show how much the accuracy of the network is affected by different values of $K$. We adopt different values of $K$ and evaluate their impact. The results are summarized in Fig.~\ref{fig:k}.

\begin{figure*}[t]
\centering
\includegraphics[scale=0.05]{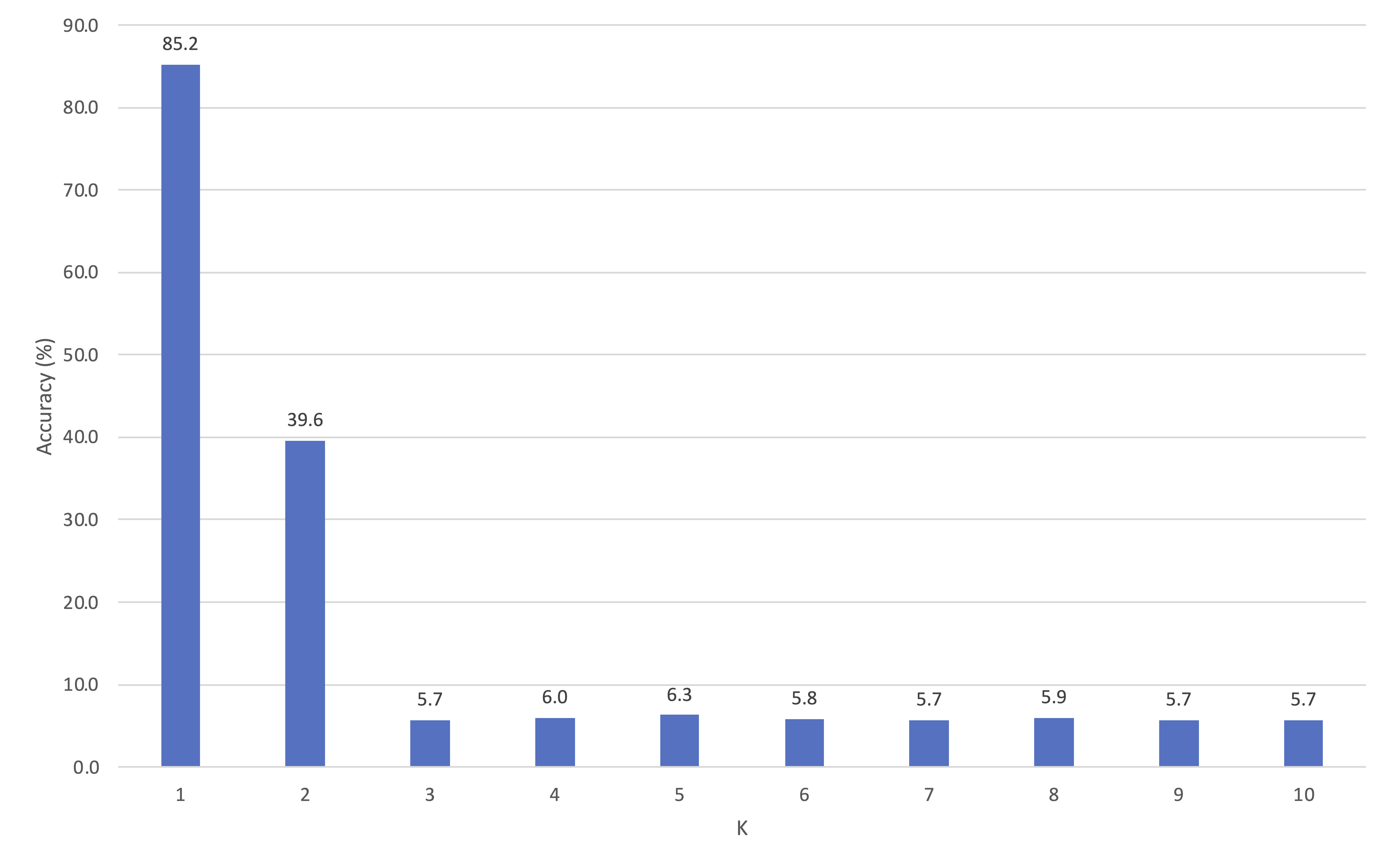}
\caption{The accuracy on different values of $K$}
\label{fig:k}
\end{figure*}


Although the number of cases is small, from the table, we can see that as the value of $K$ increases, the accuracy of the network decreases rapidly and drastically, and stabilizes after reaching 3. The results clearly show that, as expected, when the value of $K$ increases, the intervals of the neurons also enlarge. As a result, a huge change in the biases (or weights) may be introduced through model clipping, and consequently reduce the accuracy of the network. This result thus suggests that our strategy of gradually increasing the value of $K$, i.e., relaxing the certificate, and stopping as soon as the broken properties are fixed is a reasonable one. \\

\noindent \emph{RQ4: Is CCL effective if the network is retrained multiple times?} In practice, it is entirely likely that the neural network must be retrained multiple times. To evaluate whether CCL is relevant in such a usage scenario, we conduct additional experiments based on a model trained on the MNIST dataset and another one trained on the CIFAR10 dataset. In both cases, we divide the dataset into five groups, each of which contains training data of two classes. For instance, the five groups of the MNIST dataset contain images labeled 0/1, 2/3, 4/5, 6/7, and 8/9 respectively. We then train an initial neural network using the first group of data and then retrain it with the remaining groups one by one. In total, there are four times of retraining. Before each round of training, we add 10 neurons to each hidden layer for the MNIST model and 100 neurons for the CIFAR10 model. In each round, two neurons are added to the output layer.
Each group of data is trained with 20 epochs for the MNIST model and 50 epochs for the CIFAR10 model. After training with the first group of data, we randomly select 25 samples from the test set and use DeepPoly to verify the local robustness of the model with regard to these 25 samples and obtain the corresponding certificates.
The idea is to check whether these properties can be preserved through multiple retraining. 

The results are shown in Table~\ref{tbl:mnist} and Table~\ref{tbl:cifar10}. First, we can observe that catastrophic forgetting occurs during each retraining for both models, e.g., with the \emph{Baseline} approach, the final model has an accuracy of less than 20\%, which means that the model has completely forgotten about all the previous knowledge. Second, with the \emph{Baseline+OD} approach, the accuracy is much improved, although there is a steady decline along the way. Besides accuracy, the certificates may become invalid and sometimes magically become valid again. More importantly, if a property is found to be violated, it is challenging to repair the model such that the property is restored. Third, with the lightweight approach of certificate-based data augmentation (\emph{Baseline+DS}, \emph{CCL}, and \emph{CCL+OD}), we are able to preserve the properties through four times of retraining.
Forth, comparing the accuracy of the \emph{Baseline} approach and the \emph{CCL} approach, we can observe that CCL helps to improve the accuracy as well. Lastly, comparing the accuracy of the \emph{Baseline+OD} approach and the \emph{CCL+OD} approach, it can be observed that the former is likely to have better accuracy, especially in the case of the CIFAR10 model. This is likely due to the certificate-based regularizer, which constrains how the network parameters are updated, i.e., a price that we have to pay to preserve the model's correctness.

{\color{black} Finally, to get more insights about the performance of CCL, we collect the accuracy of different labels for the last model trained by CCL in Table 4. In general, the accuracy is divided into 3 groups:

\begin{itemize}
\item Group 1 includes samples with labels 0 and 1. Thanks to the data augmentation step and the constraint in the loss function, we maintain a relatively good accuracy (64.4\% for label 0 and 78.1\% for label 1).
\item Group 2 includes samples with labels 8 and 9. These are the newest training data and so they have the highest accuracy (95.2\% for label 8 and 96.9\% for label 9).
\item Group 3 includes samples with labels from 2 to 7. There are no constraints or training data during the last round, catastrophic forgetting happens and the accuracy for these samples drops significantly, i.e., to 0.0\%.
\end{itemize}
 
The results are consistent with our understanding. We also notice that in real applications, new training data may come with new properties. With the certificates for these properties, most of the samples will fall into Group 1 instead of Group 3. Moreover, some approaches to keep part of old training data may help. For instance, using the CCL+OD setting, the last model has accuracy ranging from 74.8\% (label 5) to 97.9\% (label 1).\\
}

{\color{black} \noindent \emph{Limitation.} In the following, we discuss some limitations of our approach.

First, the performance of the model may decline through multiple rounds of retraining (due to the constraints introduced by the accumulated certificates as well as the difference between the new data and the old data). In the extreme case, the new data and old data may even contradict each other which makes the accuracy to be very low or the certificates/properties impossible to satisfy. Practically, the users should choose whether to only finetune the existing neurons, introduce new neurons, or train a new model from scratch, based on the accuracy of the model. If the accuracy is too low or CCL complains that the certificates cannot be repaired, the users will have to train a new model from scratch.

Second, as we show in Table 3, the data augmentation and regularization step of CCL has no scalability issue. The verification step however does and it is safe to say that our scalability is similar to the scalability of existing neural network verification techniques. Note that we cannot avoid this scalability issue if we would like formal guarantees on certain properties. To use CCL for larger models, more resources are needed to overcome the scalability issue in the verification step.
}

\begin{table*}[t]
\begin{center}
\resizebox{0.8\textwidth}{!}{
\begin{tabular}{| l | c | c | c | c | c | c | c | c | c | c |}
\hline
 \multirow{2}{*}{No of classes} & \multicolumn{2}{c|}{Baseline} & \multicolumn{2}{c|}{Baseline+DS} & \multicolumn{2}{c|}{Baseline+OD} & \multicolumn{2}{c|}{CCL} & \multicolumn{2}{c|}{CCL+OD} \\
 \cline{2-11}
 & Cert. & Acc. (\%) & Cert. & Acc. (\%) & Cert. & Acc. (\%) & Cert. & Acc. (\%) & Cert. & Acc. (\%) \\
 \hline
 $2$  & 25 & 100.0 & 25 & 100.0 & 25 & 100.0 & 25 & 100.0 & 25 & 100.0   \\ 
 $4$  & 0 & 48.1   & 25 & 88.1  & 22 & 97.4  & 25 & 91.1  & 25 & 97.8   \\
 $6$  & 0 & 30.9   & 25 & 56.8  & 24 & 93.3  & 25 & 60.1  & 25 & 93.7   \\
 $8$  & 0 & 24.6   & 25 & 44.1  & 24 & 91.7  & 25 & 44.8  & 25 & 92.3   \\
 $10$ & 0 & 19.5   & 25 & 33.4  & 25 & 90.1  & 25 & 34.2  & 25 & 89.5   \\
 \hline
\end{tabular}
}
\caption{Multiple training results on the MNIST model}
\label{tbl:mnist}
\end{center}
\end{table*}

\begin{table*}[t]
\begin{center}
\resizebox{0.8\textwidth}{!}{
\begin{tabular}{| l | c | c | c | c | c | c | c | c | c | c |}
\hline
 \multirow{2}{*}{No of classes} & \multicolumn{2}{c|}{Baseline} & \multicolumn{2}{c|}{Baseline+DS} & \multicolumn{2}{c|}{Baseline+OD} & \multicolumn{2}{c|}{CCL} & \multicolumn{2}{c|}{CCL+OD} \\
 \cline{2-11}
 & Cert. & Acc. (\%) & Cert. & Acc. (\%) & Cert. & Acc. (\%) & Cert. & Acc. (\%) & Cert. & Acc. (\%) \\
 \hline
 $2$  & 25 & 88.7 & 25  & 88.7 & 25 & 88.7 & 25 & 88.7 & 25 & 88.7   \\ 
 $4$  & 0  & 38.9 & 25  & 53.3 & 18 & 67.9 & 25 & 52.2 & 25 & 57.4   \\
 $6$  & 0  & 26.9 & 25  & 39.4 & 15 & 48.6 & 25 & 40.6 & 25 & 45.7   \\
 $8$  & 0  & 22.3 & 25  & 31.5 & 17 & 46.1 & 25 & 30.4 & 25 & 38.6   \\
 $10$ & 0  & 17.4 & 25  & 19.5 & 8  & 45.8 & 25 & 18.2 & 25 & 33.8   \\
 \hline
\end{tabular}
}
\caption{Multiple training results on the CIFAR10 model}
\label{tbl:cifar10}
\end{center}
\end{table*}
\section{Related Work} \label{sec:related}
This work is closely related to works on neural network verification and works on continual learning. We already discuss these lines of work in Section~\ref{sec:background}. In the following, we discuss some broadly related work on improving the likelihood of neural networks satisfying desirable properties. \\

\noindent \emph{Adversarial training} is a group of methods that aim to improve a neural network's robustness by training additionally with adversarial samples (either generated before training or during training)~\cite{fgsm,kurakin2016adversarial,madry2017towards,balunovic2020adversarial}. The state-of-the-art approaches in this category transform the training process into a min-max optimization problem, which is defined as follows.

$$
\argmin_{\theta} \max_{(x,y) \in D,~x' = A(x)} \sum{loss(N_{\theta}(x'), y)}
$$
where $\theta$ is the set of parameters (e.g., weights and biases) of the network $N$; $D$ is the training set, which contains a set of tuples with input vector $x$ and output vector (or label) $y$; $x'$ is an adversarial sample generated based on $x$ using the attacking function $A$. Note that typically $x'$ should satisfy $d(x',x) \leq c$. Intuitively, the inner max part is the maximal damage that can be caused by a perturbation and the goal is to minimize the maximal damage. Different approaches use different attacking functions to find $x'$ as well as different methods to approximate the inner max part and solve the above optimization problem (e.g., using gradient descent and updating all the parameters at once~\cite{fgsm,kurakin2016adversarial,madry2017towards} or only updating layer by layer~\cite{balunovic2020adversarial}). Such an approach can be potentially extended to support other properties as well. While the above adversarial training approaches are shown to improve models' robustness, they cannot provide a certificate (except~\cite{balunovic2020adversarial}).
Instead, the improvement is measured by attacking the trained model using some existing adversarial attacks (e.g., C\&W~\cite{carlini2017towards}). \\

\noindent \emph{Certified training} aims to train neural networks in a way such that it is guaranteed that the network satisfies some properties (e.g., robustness or fairness). The general idea is to use a sound approach to compute an upper bound of the inner max part, and optimize the network parameters so that the maximal damage is soundly limited to a certain degree and as a result, the neural network can be certified against certain properties. Approaches in this category differ in terms of how they solve the inner max part. Example approaches include using Lipschitz regularization~\cite{hein2017formal}, linear programming~\cite{wong2018provable}, semidefinite relaxation~\cite{raghunathan2018certified,dvijotham2018training}, hybrid zonotope~\cite{mirman2018differentiable}, or interval bound~\cite{gowal2018effectiveness}. Another line of approach to train certified networks (against robustness or privacy) is referred to as randomized smoothing or differential privacy. In~\cite{lecuyer2019certified}, Lecuyer \emph{et al.} propose to add a noise layer into the model architecture and shows that it can help to achieve certified robustness. In~\cite{DBLP:conf/ccs/AbadiCGMMT016}, Abadi \emph{et al.} propose the \emph{differentially private SGD} training algorithm. The idea is that random noise is introduced in a principled manner in the backpropagation step before updating the model's parameters. It is shown that their approach can be proved to satisfy differential privacy. \\

\noindent \emph{Network repairing} is a line of approach that are complementary to ours, which aims to modify a trained model so that it satisfies certain desired properties. In this line of work, CARE~\cite{sun2022causality} is the state-of-the-art, in which, the authors use causal analysis to identify the neurons which are most responsible for violating the property. The neural network is then repaired by solving an optimization problem using a fitness function in which the weights and biases of the responsible neurons are the variables. The goal is to find new values for the weights and biases of the responsible neurons such that the repaired network satisfies the desired properties and still maintains high accuracy. Other existing works in this category include~\cite{sotoudeh2019correcting,goldberger2020minimal,usman2021nn,sotoudeh2021provable}, which differ, in essence, in the way of finding responsible neurons/layers and identifying new values for their weights and biases. For instance, Sotoudeh \emph{et al.}~\cite{sotoudeh2019correcting} solve the problem based on SMT; Goldberger \emph{et al.}~\cite{goldberger2020minimal} solve the problem based on neural network verification techniques; Usman et al.~\cite{usman2021nn} rely on constraint-based solving; and Sotoudeh et al.~\cite{sotoudeh2021provable} use linear programming. Finally, NeuRecover \cite{tokui2022neurecover} is loosely related to this line of work and uses particle swarm optimization to suppress regression errors in neural networks.\\

\noindent Compared to the above-mentioned related work, CCL is based on neural network verification and continual learning and aims to address the problem of maintaining verification certificates as long as possible during continual learning. To the best of our knowledge, this problem has not been addressed yet.
\section{Conclusion} \label{sec:conclusion}
In this work, we proposed a method called certified continual learning, which aims to preserve the correctness of a verified neural network as much as possible when it is retrained through continual learning. The main idea is to bundle the model with its correctness certificates and use the certificates to guide how and where the neural network should be updated during retraining. The experimental results suggest our approach can effectively preserve the correctness of a neural network and often has a positive impact on the model accuracy as well.  

For future works, we plan to explore whether our approach can support probabilistic correctness properties such as group fairness~\cite{sun2021probabilistic}, backdoor-freeness~\cite{pham2022backdoor}, and differential privacy~\cite{DBLP:conf/ccs/AbadiCGMMT016}.
%

%
%
%
\bibliography{ref}
\bibliographystyle{ACM-Reference-Format}


%
\end{document}